\documentclass[preprint,12pt]{elsarticle}

\usepackage{amssymb}
\usepackage{amsmath}

\journal{ToC}

\newtheorem{Remark}{\sc Remark}[section] 
\newtheorem{Theorem}{\sc Theorem}[section]
\newtheorem{Corollary}[Theorem]{\sc Corollary}

\newtheorem{Definition}{\sc Definition}[section]
\newtheorem{Hypothesis}[Theorem]{\sc Hypothesis}

\begin{document}

\begin{frontmatter}



\title{Turing or Cantor: That is the Question} 


\author{Eugene Eberbach\corref{cor1}\fnref{label2}} 
\ead{eeberbach@gmail.com}
\fntext[label2]{Corresponding Author}
\cortext[cor1]{Retired Professor of Practice (Rensselaer Polytechnic Institute)}
\affiliation{organization={\;\;\;\;\;\;\;\; Dept. of Eng. and Science, Rensselaer Polytechnic Institute}, 
            city={Hartford},
            state={CT},
            country={USA}}

\begin{abstract}
Alan Turing is considered as a founder of current computer science together with Kurt G\"{o}del, Alonzo Church and John von Neumann. In this paper multiple new research results are presented. It is demonstrated that there would not be Alan Turing's achievements without earlier seminal contributions by Georg Cantor in the set theory and foundations of mathematics. It is proposed to introduce the measure of undecidability of problems unsolvable by Turing machines based on probability distribution of its input data, i.e., to provide the degree of unsolvabilty based on the number of undecidable instances of input data versus decidable ones. It is proposed as well to extend the Turing's work on infinite logics and Oracle machines to a whole class of super-Turing models of computation. Next, the three new complexity classes for TM undecidable problems have been defined: U-complete (Universal complete), D-complete (Diagonalization complete) and H-complete (Hypercomputation complete) classes. The above has never been defined explicitly before by other scientists, and has been inspired by Cook/Levin NP-complete class for intractable problems. Finally, an equivalent to famous
$P \;?= \; NP$ unanswered question  for $NP$-complete class, has been answered negatively for U-complete class of complexity for undecidable problems.
\end{abstract}



\begin{keyword}



computability, Entscheidungsproblem, Turing's automatic machine (a-machine) and oracle machine (o-machine), undecidability, U-complete (Universal complete), D-complete (Diagonalization complete) and H-complete (Hypercomputation complete) complexity classes.


MSC codes: 03A, 03D, 03E, 03F, 03H, 68Q04, 68Q09, 68Q15, 68W01

\end{keyword}

\end{frontmatter}




\section{Introduction}\label{sec1}

Alan Turing was born in 1912, became an undergraduate in mathematics at King's
College, the University of Cambridge, England in 1931, and
published a paper in 1936 whose model of Turing machines became a central
basis for the theory of computing.
This paper, entitled ``On Computable Numbers with an Application to the
Entsheidungsproblem'' \cite{turing36} was primarily a mathematical demonstration that
computers could not model all mathematical problem solving.
In his paper, Turing introduced his discrete state automatic machines ($a$-machines, known currently as Turing machines) used to prove the undecidability of the famous decision Entscheidungsproblem of Hilbert whether mathematics is complete and all its statements (theorems) can be decided to be true or false. 

In this paper it is stressed the genius of Alan Turing in creation of digital programmable computers, formalization of algorithms,  proving that his Turing machines cannot solve by finite means all problems, and mathematics is incomplete. Turing did that by inventing a new automatic machine model of computation (known now as Turing machine TM) and proving that some real numbers cannot be computed by TMs. The above follows directly from earlier work of Georg Cantor who proved that the cardinality of real numbers is strictly larger than the cardinality of integers.  The job of Turing was simply to associate the number of all Turing machines with integers, and the number of all languages/problems with real numbers. That's it - the rest is the history.  The elegance and simplicity of Turing's approach turned out to be equivalent to disproving the solvability of the famous Hilbert's Entscheidungsproblem (decision problem) in mathematics and provided the limit to the algorithmic problem solving by computers. 
As the follow up of the Turing's proof of the undecidability of Entscheidungsproblem  three new complexity classes for TM undecidable problems have been defined: U-complete (Universal complete), D-complete (Diagonalization complete) and H-complete (Hypercomputation complete) classes.

This paper concentrates on the famous Alan Turing's blueprint from 1936 \cite{turing36}
fulfilling in computer science the function of providing the foundations of computers science and computers. For that paper alone, Alan Turing has been recognized as the father of computer science. 
If so, in this paper, it has been justified that perhaps another famous mathematician,  Georg Cantor might be considered an implicit forgotten grandfather of computer science together with Charles Babbage and Wilhelm Leibnitz.

\cite{turing36} is a very interesting and strange paper. It is written almost entirely in plain English, it does not have an abstract, keywords, proper introduction with previous work done, explicit theorems and definitions, practically no references, it contains several errors (corrected later), no conclusions and future work despite that it is considered commonly as the starting point of the era of programmable digital computers. 
In the opinion of the author of this paper, Turing's paper was (and still is) too original and unconventional to be accepted by the majority of current professional journals, because it violated several rigid formal requirements. The question remains open, how ingenious Turing's paper really was that despite of all the above problems, it remains to be so tremendously successful and influential for so many years.

The \emph{Entscheidungsproblem}
(decision problem), one of the most
alluring conjectures in mathematics, proposed by the prominent 
mathematician David Hilbert in 1928 \cite{hilbert28}. Hilbert's conjecture that any mathematical proposition could be 
{\em decided} (proved true
or false) by mechanistic logical methods was approved by many mathematicians, but was unexpectedly disproved by
G\"odel in 1931 \cite{godel31}, who showed that for any formal theory, there will always be
undecidable theorems outside of its reach (famous {\em incompleteness theorem}). 

Mathematicians like Alonzo Church
\cite{church36}
and Turing \cite{turing36} continued G\"odel's work, looking for alternate techniques for
proving this undecidability result.
Turing's disproof of Hilbert's {\em Entscheidungsproblem} was accepted by G\"odel as
better than his own proof, and was also accepted by Church as better than his own
attempt to show that his $\lambda$-calculus could not prove the
{\em Entscheidungsproblem} either \cite{church36}.

Turing's proof, provided in his 1936 paper \cite{turing36} was based on a
novel model of \emph{automatic machines} ($a$-machines), which can be built
to carry out any algorithmic computation. 
Turing showed that despite their versatility, these machines 
cannot compute all real numbers (extended to functions of integers, functions of real/computable variables, computable predicates, etc.), in particular, he proved that 
the now-famous \emph{halting problem} of the Universal Turing Machine (representing all Turing machines) is undecidable.
Turing defined {\em ``computable''} numbers at the beginning of his paper as the real numbers whose expressions as a decimal are calculable by finite means\footnote{Turing justified that by the fact that the human memory is neccesarily limited. However, it does not prevent us to think and operate with infinite objects, and that the contents of our memory fluctuates by forgetting and learning}.

In the 1960s,  with the development of new ACM computer science curriculum based on algorithms,
Turing Machines were chosen to fulfill a central role in computer science, because of its
elegance and simplicity, and for many years they have been useful in that function. 

However, we forget typically that 
Turing was more concerned that the {\em Entscheidungsproblem} is not solvable, and mathematics cannot be fully
described by recursive algorithms, rather than to propose or create the foundations of computer science.
Turing himself did not accept Turing machines as a complete model for problem solving, proposing 3 other models of computation going beyond his $a$-machines, and its
acceptance by theoreticians contradicted Turing's view on this subject.

The justification why we need new models of computation besides TMs are the following. In \cite{denning11}  Peter Denning expressed concerns that developments in non-terminating computation, analog computation, continuous computation, and natural computation may require rethinking the basic definitions of computation. He further stated that computation is the process that the machine or algorithm generates. In analogies: the machine is a car, the desired outcome is the driver's destination, and the computation is the journey taken by the car and driver to the destination. 

On the other hand, Alfred Aho \cite{aho11} wrote ``as the computer systems we wish to build become more complex and as we apply computer science abstractions to new domains, we discover that we do not always have the appropriate models to devise solutions. In these cases, computational thinking becomes a research activity that includes inventing appropriate new models of computation''. It looks that the creation of new models of computation will be a never-ending story. Unfortunately, the elegant Turing machines do not constitute the final and complete model of computation (despite that many computer scientists and the author of this paper would like to be so).

As it is written in \cite{wegner12} - Turing machines can be considered as an attempt to create the {\em theory of everything} for computer science, whereas similar attempts of complete theories for physics (by Newton, Laplace, Einstein, Hawking), mathematics (by Hilbert, G\"{o}del, Church, Turing) or philosophy (by Aristotle, Plato, Hegel) have failed.  If Turing machines were truly complete, computer science with its Turing machine model would be an exception from other sciences, and computer science together with its Turing machine model would be complete. If so, by reduction techniques, we could prove also completeness of mathematics (decision problem in mathematics \cite{hilbert28} - disproved by G\"{o}del, Church and Turing \cite{godel31,church36,turing36}), and completeness of physics, philosophy, medicine, economy and so on.

Problems can be either solvable or unsolvable (called also undecidable) using a specific model/ theory. In computer science, Turing machines form such dominating and most popular model for problem solving.
Some problems are Turing machine solvable and some not. In particular, this paper intends to provide a new approach to deal with Turing machine undecidable problems. The typical belief is that  proving that a specific problem is TM-undecidable stops any attempt to solve that problem and that is the end of the story. On the other hand, we are convinced that this is only the beginning. First of all, we may decide special instances (or perhaps even almost all instances) of the undeciable problem. For example, if we have the probability distribution of input instances, perhaps randomized techniques may help to estimate which inputs are decidable. Secondly, we can approximate the solutions and we may decide the specific instances either in a finite number of steps or asymptotically in the infinity. There are other approaches possible to deal with undecidability too (see, e.g., the infinity, evolution and interaction principles \cite{eber04a}). Our measure of the unsolvability of computer science problems (although strangely not used yet) is very simple and formal - it is reprented by the percentage of the input instances that are undecidable. For example, the problem that is 100\% undecidable, means that all input instances are undecidable and there are no decidable instances. On the other hand, 0\% of undecidability is typical for recursive problems/languages, and recursively enumerable problems /languages may have non-zero undecidability measure.

The new contributions of this paper are the following:
\begin{itemize}
\item
proposing to use the measure of undecidability of problems unsolvable by Turing machines based on the probability distribution of its input data, i.e., to provide the degree of unsolvabilty based on the number of undecidable instances of input data versus decidable ones,
\item
a new shorter proof of the Hilbert's Entscheidungsproblem of the Turing universal TM \cite{turing36} based on Cantor's results, 
\item 
the extension of Turing oracle machines (Turing $o$-machines) \cite{turing39} to the whole class of super-Turing (hypercomputational) models of computation, 
\item
introduction of three new complexity classes \cite{eber23} for Turing machine undecidable problems, i.e., U-complete, D-complete, H-complete complexity classes inspired by Cook/Levin NP-complete class for intractable problems \cite{hopcroft01,kleinberg06},
\item
proving that recursive languages are not equal to recursively enumerable languages for U-complete problems that is obvious compared to the mystery relation between polynomial versus nondeterministic polynomial languages for NP-complete problems,
\item
hypothesizing that we have an infinite hierarchy of complexity problems  for undecidable problems  based on Cantor \cite{cantor74} and Turing \cite{turing39}.
\end{itemize}

\medskip
This paper is organized as follows. In section 2, we define Turing machines in the form used currently in computer science textbooks. Section 3 covers work of Cantor used in computer science. In section 4, we describe two solutions of the Entscheidungsproblem and going beyond it. In section 5, we discuss other TM-undecidable problems and three new complexity classes for them. Section 6 contains conclusions and final comments.

\section{Turing Machine}

A Turing machine (TM) is considered a formal model of a (digital) computer
running a particular program. A Universal Turing machine represents all possible TMs.  The Turing Machine is the invention of
Alan Turing, who introduced his $a$-machine in his 1936 paper \cite{turing36}               
as a byproduct to prove
the unsolvability of Hilbert's {\em Entscheidungsproblem}\footnote{done as the proof of unsolvability
of the halting problem of a universal TM}.

TM supposed to model any  human (called a {\em computor})
solving algorithmically an arbitrary problem using mechanical methods.
TM has a finite control (having a finite number of states, called by Turing {\em m-configurations}), and an infinite bi-directional one-dimensional tape of
cells (called by Turing {\em squares}), each cell keeping one symbol, and a read/write head. 
Initially, the tape contains a finite-length string $X_1X_2... X_n$ of symbols from the input
alphabet. All other cells, extending infinitely to the left and right are blank, denoted by $B$. A TM being in some
state reads a symbol from the tape alphabet, 
and moves head one position left or right. These are called the {\em moves} by Turing. A read-write tape serves both as input,
output and unbounded storage device. An abstract tape head is the marker - it marks 
the ``current'' cell, which is the only cell that can influence the move of a TM.

\vspace{1cm}
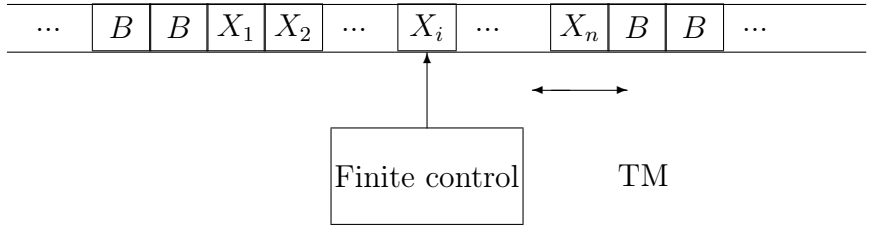
\begin{figure}
\begin{center}
\setlength{\unitlength}{0.01in} 
\begin{picture}(362,161)(84,545)
\put(30,680){\line(1,0){450}}
\put(30,705){\line(1,0){450}}
\put(200,590){\framebox(100,50){Finite control}}
\put(250,640){\vector( 0, 1){40}}
\put(315,660){\vector( 1,0){40}}
\put(325,660){\vector( -1,0){20}}

\put(45,691){\makebox(0,0)[lb]{\raisebox{0pt}[0pt][0pt]{$...$}}}
\put(75,681){\framebox(30,24){$B$}}
\put(105,681){\framebox(30,24){$B$}}
\put(135,681){\framebox(30,24){$X_1$}}
\put(165,681){\framebox(30,24){$X_2$}}
\put(205,691){\makebox(0,0)[lb]{\raisebox{0pt}[0pt][0pt]{$...$}}}
\put(235,681){\framebox(30,24){$X_i$}}
\put(275,691){\makebox(0,0)[lb]{\raisebox{0pt}[0pt][0pt]{$...$}}}
\put(315,681){\framebox(30,24){$X_n$}}
\put(345,681){\framebox(30,24){$B$}}
\put(375,681){\framebox(30,24){$B$}}
\put(415,691){\makebox(0,0)[lb]{\raisebox{0pt}[0pt][0pt]{$...$}}}

\put(350,610){\makebox(0,0)[lb]{\raisebox{0pt}[0pt][0pt]{TM}}}
\end{picture}
\end{center}
\caption{Turing automatic machine}
\end{figure}

\bigskip
\begin{Definition}(Turing machine)
\newline
Formally, a Turing machine (TM) is the 7-tuple and has been defined by several scientists capturing a descriptive ideas from \cite{turing36} (our definition is based on  \cite{hopcroft01}):

\begin{center}
$ M = ( Q, \Sigma, \Gamma, \delta, q_0, B, F)$ , where
\end{center}
\begin{description}
\item{$Q$}
 is a finite set of {\em states}, 
\item{$\Gamma$}
is a finite {\em tape alphabet}, $\Sigma \subseteq \Gamma$ is an 
{\em input alphabet},
\item{$\delta:Q \times \Gamma \rightarrow Q \times \Gamma \times \{L,R\}$}
is a {\em transition (next move) function} (can be undefined for some arguments);
it takes a state and tape symbol, returns a new state and replacement symbol and 
direction $L/R$ (left/right) for head motion; initially, the tape head is at the
leftmost cell that holds the finite input,
\item{$q_0 \in Q$}
is the {\em initial state}; $F \subseteq Q$ {\em accepting (final) states (can be empty)},
\item{$B \in \Gamma - \Sigma$} is a {\em blank symbol} (it appears initially in all
but the finite number of initial cells that hold input symbols).
\end{description}
\end{Definition}

We use the simplicity of the TM model to prove formally that there are
specific problems (languages) that the TM cannot solve. Solving the problem
is equivalent to decide whether a string belongs to the language.

TM language is accepted in two equivalent ways:
\begin{description}
\item{\em by final state}: 
the set of input strings that cause it to rich a final state\\
$L(M)= \{ w \;|\; q_0w \vdash^* \alpha p \beta$ for some $p \in F$ 
and $\alpha,\beta \in \Gamma^*$, $w \in \Sigma^*$, $q_0$ is initial state $\}$.
\item{\em by halting}:  
(used originally by Alan Turing) the set of input strings that cause TM to halt, i.e., to have no next move defined\\
$H(M)= \{ w \;|\; q_0w \vdash^* \alpha p X \beta$ and $\delta(p,X)$ is not defined,
$\alpha,\beta \in \Gamma^*$, $X \in \Gamma$, $w \in \Sigma^*$, $q_0$ is initial state $\}$.
\end{description}

The number of different languages (problems) over any alphabet of more than one
symbol is not countable (justification by diagonalization argument), 
however the number of all possible TMs is enumerable. Thus it is clear
that there exist problems which cannot be mapped (solved) by TMs.
In such a way Turing received his result about unsolvability of the ``halting problem''
of the Universal Turing Machine representing all possible TMs. This corresponds
directly to the unsolvability of Hilbert's {\em Entscheidungsproblem} - the main
reason of introduction by Turing his $a$-machines.

We will speak that one model is {\em more expressive} than another if it allows to decide
about a larger set of strings (languages/problems).

\bigskip
\begin{Remark}
There are several extensions of TMs that turned out to be equivalent to TMs. These are multitatpe TMs, nondeterministic TMs, semi-infinite tape TMs, TMs with left L, right R, and no move N, multistack machines  and counter machines. 
There are also mutliple models of computation that turned out to be equivalent to TMs, e.g., Markov normal algorithms, Post systems, G\"{o}del functions, Knuth attribute grammars, van Wijngaarden two-level grammars, Chomsky-0 grammars, RAM machines, Pidgin Algol, Herbrand functions, Kleene recursive functions, Church $\lambda$-calculus. 
\end{Remark}

\bigskip
\begin{Remark}
There are also models of computations less expressive than TMs, e.g., finite automata, pushdown automata, linearly bounded automata, Chomsky-3, Chomsky-2, Chomsky-1 grammars.
\end{Remark}

\bigskip
\begin{Remark}
It turned out that there are also models of computation, more expressive than TMs. They are called {\em super-Turing} (or {\em hypercomputational}) models of computation. Three of them were proposed by Turing himself, i.e., choice $c$-machines {\em \cite{turing36}}, oracle $o$-machines {\em \cite{turing39}}, and unorganized $u$-machines {\em \cite{turing48}}. Other examples include cellular automata, neural networks, Interaction machines, Persistent TMs, Site and Internet machines, $\pi$-calculus, \$-calculus, Inductive TMs, Infinite time TMs, Accelerating TMs, Evolutionary TMs and Evolutionary automata. They derive higher expreesiveness using three principles {\em \cite{eber04a}}: {\em interaction} (by interacting with the external component), {\em evolution} (by allowing adaptability of TM), {\em infinity} (initial configuration, number of components, time, memory, alphabet).
\end{Remark}

\section{Cantor and Computer Science}

Georg Cantor is considered as one of the founders of the set theory and foundations of mathematics \cite{whitehead10}. His results were heavily used by Alan Turing in \cite{turing36, turing39}, thus, indirectly, he influenced computer science too.

The main contributions of Cantor that are crucial for computer science are the following:

\begin{itemize}
\item
Cantor introduced the one-to-one correspondence, allowing to prove that there are many types of infinite sets;
\item
Cantor proved that the set of real numbers is larger than the set of natural numbers (uncountable and countable infinities) \cite{cantor74} (used by Turing in \cite{turing36});
\item
Cantor proved that the power set of any set is strictly larger than the set itself;
\item
Cantor introduced the diagonal argument \cite{cantor91} (used by Turing in \cite{turing36});
\item
Cantor introduced continuum hypothesis that there exists no set whose power is greater than that of the naturals and less than that of the reals;
\item
Cantor introduced infinite sequences of cardinalities and ordinals and their arithmetic (used by Turing in \cite{turing39}).
\end{itemize}

\section{Two Solutions and beyond the Entscheidungsproblem}

The Entscheidungsproblem is the first undecidable problem from the uncountable number of undecidable problems. Although very important, it is an instance of one unsolvable problem only. To concentrate exclusively on it (or its equivalent: the Halting problem of Universal TM), would be equivalent to restriction of the whole class of NP-complete problems to one SAT ({\em Satisfiability}) problem only.

\subsection{Solution of the Entscheidungsproblem by Alan Turing}

Turing, in a smart way, selected real numbers only (we can think about this as the encoding by decimals or binaries of any logical statement in mathematics which always is possible), to disprove the solvability of the Entscheidungsproblem. He proved that his $a$-machines cannot compute algorithmically by finite means all definable real numbers (having by Cantor an uncountable cardinality), but only its computable subset (defined by countable number of $a$-machines). 

\bigskip
\begin{Remark}
In {\em \cite{aberth68}} a more precise clarification of computable numbers has been provided: ``A computable number may be defined roughly as a (real) number for which rational approximations with an arbitrarily small error may be obtained by final means. Thus the familar numbers $1/2$, $e$, $\pi$, $\sqrt 2$ are all computable, while at the same time not all real numbers are, since the computable numbers form a countable set" (and we have an uncountable number of real numbers, thus we do not have enough recursive algorithms to compute all real numbers). The author presented also examples of un-computable numbers, based of nonconvergent sequences violating the above condition. Probably, the simplest example of un-computable numbers, would be finite random sequences of 0's and 1's generated by a uniform random number generator. It is obvious that in most cases they would generate un-computable sequences of binary representation of real numbers, because we have more un-computable real numbers than computable ones.
\end{Remark}

\bigskip
In sections 1-6 of \cite{turing36} Turing defined auxiliary notions, and provided examples of specific $a$-machines. His $a$-machines accepted by halting, i.e., they stopped when the next move was undefined. Turing called his $a$-machine {\em circular} if it printed a finite number of $0$'s and $1$'s on its tape, and {\em circle-free} otherwise.

In section 7, Turing defined finally a universal $a$-machine, i.e., a machine able to encode any $a$-machine (its input and moves) and to emulate motions and work of any such $a$-machine on a universal $a$-machine). In such a way, he represented all logical statements of Entscheidungsproblem by different instances of $a$-machines, and their decidability to halting of his universal $a$-machine. A universal $a$-machine was useful to generate a sequence of computable numbers printed by $a$-machines used in Turing's proof of uncountability.

In section 8, Turing used Cantor's diagonal argument, to prove that his universal $a$-machine, printing  a finite sequence of computable numbers represnting all $a$-machines, will not be able to print the number from the negation of diagonal which will be a defined real number, but not computable (there will no $a$-machine representing it). 

As Turing wrote (\cite{turing36}, pp.246) in his diagonal argument: ``If the computable sequences are enumerable, let $\alpha_n$ be the $n$-th computable sequence, and let $\phi(m)$ be the $m$-th figure in $\alpha_n$. Let $\beta$ be the sequence with $1 - \phi(n)$ as its $n$-th figure. Since $\beta$ is computable, there exists a number $K$ such that $1 - \phi_n(n) = \phi_K(n)$, all $n$. Putting $n = K$, we have $1 = 2 \phi_K(K)$, i.e. $1$ is even. This is impossible. The computable sequences are therefors not enumerable''. The fallacy in this argument lies in the assumption that $\beta$ is computable (and that there exists a corresponding $a$-machine that halts (is not ``circle-free'' using Turing's terminology)).

Finally, in section 11, Turing  justified that the results from section 8, can be used to prove that the Hilbert Entscheidungsproblem can have no solution. This is done by showing that there will be no $a$-machine able for given logic formula to print/decide either $0$ (false) or $1$ (true). It is interesting that Alan Turing despite using two explicit Lemmas in proof of the unsolvability of the Entscheidungsproblem, never labels it as the explicit theorem (perhaps, he considered this as obvious or he was too modest). We will correct that in below explicit theorem using the above Turing's proof 
from \cite{turing36}.

\bigskip
\begin{Theorem}(Turing 1936 \cite{turing36} - On undecidability of Hilbert Entscheidungsproblem)
\newline
There can be no general process for determining whether a given formula $\cal U$ of the functional calculus {\bf {\sf K}} is provable, i.e. that there can be no $a$-machine which, supplied with any one  $\cal U$ of these formulae, will eventually say whether $\cal U$ is provable. $\Box$
\end{Theorem}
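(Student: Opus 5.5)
The plan follows Turing's own route in \cite{turing36}: first establish an undecidable property of $a$-machines, then reduce it to provability in the functional calculus {\bf {\sf K}}. \emph{Step one}: show that no $a$-machine $\mathcal D$ can decide, given the description number of an arbitrary $a$-machine $\mathcal M$, whether $\mathcal M$ is circle-free. Suppose such a $\mathcal D$ existed. Combine it with the universal $a$-machine of Section 7 to build a machine $\mathcal H$. At stage $n$, $\mathcal H$ uses $\mathcal D$ to test whether $n$ is the description number of a circle-free machine. It then simulates the $R(n)$-th circle-free machine until that machine prints its $R(n)$-th figure, and prints the negation of that figure. This is exactly Cantor's diagonal construction of $\beta$ quoted above. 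Then $\mathcal H$ is circle-free, so it has some description number $K$. When $\mathcal H$ reaches stage $K$, it must compute its own $R(K)$-th figure, and this sends it into an unending regress. That contradicts its being circle-free. Hence $\mathcal D$ cannot exist.

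\emph{Step two}: derive from step one that no $a$-machine $\mathcal E$ can decide, for an arbitrary $\mathcal M$, whether $\mathcal M$ ever prints the symbol $0$. The argument uses a sequence of auxiliary machines $\mathcal M_1,\mathcal M_2,\dots$, obtained effectively from $\mathcal M$, where $\mathcal M_n$ prints $0$ iff $\mathcal M$ prints at least $n$ figures. A decider $\mathcal E$ would then yield the forbidden $\mathcal D$.

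\emph{Step three}: for each $a$-machine $\mathcal M$, effectively construct a formula $\mathrm{Un}(\mathcal M)$ of {\bf {\sf K}}. Its predicates encode the successor relation, the configuration at each moment, the symbol in each square, and the head position; the transition table of $\mathcal M$ is written as axioms. The formula says, roughly, that these axioms imply that $0$ is printed at some moment. Then prove Turing's two lemmas. \emph{Lemma 1}: if $0$ ever appears on the tape of $\mathcal M$, then $\mathrm{Un}(\mathcal M)$ is provable. The proof proceeds by induction on the number of moves, deriving from the axioms a formula describing each complete configuration. \emph{Lemma 2}: if $\mathrm{Un}(\mathcal M)$ is provable, then $0$ appears. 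The proof is by soundness: interpret the predicates by the actual run of $\mathcal M$, and then the formula is true only if $0$ is printed. With both lemmas, a decision procedure for provability in {\bf {\sf K}} would decide whether $\mathcal M$ prints $0$, contradicting step two. That proves the theorem.

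I expect the main obstacle to be Lemma 1. Writing down $\mathrm{Un}(\mathcal M)$ so that the formal derivation genuinely tracks each move of the computation is delicate, including the treatment of squares left unchanged and of the finitely many scanned cells. Turing's original formula contained errors that were later corrected, so I would first check the encoding on a one-move example before giving the general induction. The diagonal step is conceptually central, but once the universal machine is available it is routine.
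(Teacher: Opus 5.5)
Your proposal is correct and follows the same route as the paper. The paper gives no independent proof: it rests the theorem on Turing's own argument (the universal machine of section 7, the diagonal argument of section 8, and the reduction of section 11 via his two lemmas), and your three steps reconstruct exactly that, in more detail than the paper's summary does.

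The one step to tighten is step two. Testing $\mathcal M_1,\mathcal M_2,\dots$ one at a time with $\mathcal E$ is an infinite process, so it does not by itself give $\mathcal D$. You need Turing's device of a single machine $\mathcal G$ that runs $\mathcal E$ on $\mathcal M_1,\mathcal M_2,\dots$ in turn and prints $0$ as soon as some $\mathcal M_n$ is found never to print $0$. Then $\mathcal G$ prints $0$ exactly when $\mathcal M$ is circular, so one further application of $\mathcal E$ to $\mathcal G$ decides whether $\mathcal M$ is circle-free.
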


\subsection{Solution of the Entscheidungsproblem based on Georg Cantor's work} 

There are now in literature muliple solutions of Entscheidungsproblem and the halting problem of UTM. Of course, there are many solutions using self-reference as we did in our proof. The question erises, can we do it in a simpler way using some results from Cantor? 

We will present an affirmative answer to this problem.

In the proof, the names TM and UTM will be used  instead of $a$-machine and universal $a$-machine. 

\bigskip
\begin{Theorem}(On undecidability of Entscheidungsproblem using Cantor and self-reference)
\newline
The Entscheidungsproblem problem is undecidable.

\noindent
{\em Proof.}
From one point of view UTM is simply TM like all other TMs, and we have an enumerable number of TMs. On the other hand, UTM is a special TM, because it represents all TMs. How will  behave UTM if we put as its input UTM itself, i.e. self-reference? Putting all TMs as UTM input strings, requires to analyze all subsets of strings representing various languages of TMs binary codes. However, by Cantor theorem {\em \cite{cantor74}}, the number of such languages is uncountable, thus we cannot decide that (by halting in this case) in an enumerable (finite) number of steps. This by reduction from halting poblem to Entscheidungsproblem is equivalent to undecidability of the Entscheidungsproblem itself - reduction can be like for example the last step in Turing's proof (section 11 from {\em \cite{turing36}}).
$\Box$
\end{Theorem}

\subsection{Beyond the Entscheidungsproblem: Oracle Machines - Turing's Systems of Logic based on Cantor's Ordinals}

Together with choice $c$-machine \cite{turing36}, Turing oracle $o$-machine \cite{turing39},pp.13 were the first hypercomputational models of computation going beyond Turing $a$-machine \cite{turing36}. In particular, $o$-machines were able to solve the halting problem of UTM and Entscheidungsproblem.

In ancient Greece {\em oracles} were supernatural people able to communicate hidden knowledge from the dieties to ordinary people \cite{eber04a}. Turing did not elaborate neither on $c$-machines nor on $o$-machines \cite{turing36,turing39}. Simply, they were only theoretical vehicles to investigate some mathematical problems, and, for Turing, additionally, to get his Ph.D. under Church's supervision at Princeton \cite{turing39}. In particular, Turing mentions his $o$-machine very briefly in his Ph.D. thesis only, and most of the time he devotes to the study of systems of logic. Even less we know about $c$-machines \cite{turing36}.

$c$-machines supposed to reply to some queries by interaction with the external human operator, when $a$-machine did not know how to decide a theorem: true or false. In other words, such statement was assumed to be not the theorem, but an axiom decided by the human ($c$-machine), and not by $a$-machine.

On the other hand, the $o$-machine was the $a$-machine expanded by the {\em black-box oracle tape} who somehow ``knew'' the answer to the undecidable problems/queries from its host $a$-machine. In particular, it could be the halting problem of universal $a$-machine. Note, however, that $o$-machine cannot solve its own halting problem unless we will add another level oracle tape, and we will have then $o$-machine of the 2-nd order, and so on, up to an infinite number of oracles. This, in the spirit, resembles an infinite number of infinities/ordinals introduced by Georg Cantor,
and what Turing wrote at the beginning of his thesis \cite{turing39} about an infite sequence of more complete logics (we can think about them as more complete hierarchy of $o$-machines, similar like Entscheidungsproblem was investigated using $a$-machines):

\begin{quote}
The well known theorem of G\"{o}del shows that every system of logic is in a
certain sense incomplete, but at the same time it indicates means whereby from
a system $L$ of logic a more complete system $L'$  may be obtained. By repeating
the process we get a sequence $L, L_1 = L', L_2 = L_1 ', L_3 = L_2 ' ,. . .$ of logics
each more complete than the preceding. A logic $L_\omega$ may then be constructed in
which the provable theorems are the totality of theorems provable with the help
of the logics $L, L_1, L_2,. . .$ We may then form $L_{2\omega}$ related to $L_\omega$ the same may
as $L_\omega$ was related to $L$. Proceeding in this may we can associate a system of
logic with any given constructive ordinal. It may be asked whether a sequence
of logics of this kind is complete in the sense that to any problem $A$ there
corresponds an ordinal 
 such that $A$ is solvable by means of the logic $L_\alpha$. I
propose to investigate this problem in a rather more general case, and to give
some other examples of ways in which systems of logic may be associated with
constructive ordinals.
\end{quote}
     
\section{Other Undecidable Problems and Complexity Classes}

This section has been based on \cite{eber23}. Turing machines \cite{turing36} and algorithms are two fundamental concepts of computer science
and problem solving. Turing machines describe the limits of problem solving using conventional recursive algorithms,
and laid the foundation of current computer science in the 1960s.
TMs made imprecise definitions of algorithms mathematically more precise and formal and they led to Kleene's Turing Thesis that every algorithm can be represented in the form of Turing machine.

Note that there are several other models of algorithms, called super-recursive algorithms, that can compute more than Turing machines, using hypercomputational/superTuring models of computation \cite{burgin05,syropoulos08,eber04a}.

It turns out that (TM) {\em undecidable problems} cannot
be solved by TMs and {\em intractable problems} are solvable,
but require too many resources (e.g., steps or memory). For undecidable problems effective
recipes do not exist -
they are covered by several classes of nonrecursive algorithms
(two outer rings from Figure 2). 
 On the other hand, for intractable problems algorithms exist, but running them on a deterministic Turing Machine,
requires an exponential amount of time (the number of elementary moves of the TM) as a function of the TM input.

Let's denote the class of decidable recursive algorithms by 
$REC$, and its complement by $nonREC$.

\begin{figure}[!h]
\vspace*{-2mm} 
\begin{center}
\includegraphics[width=11.6cm]{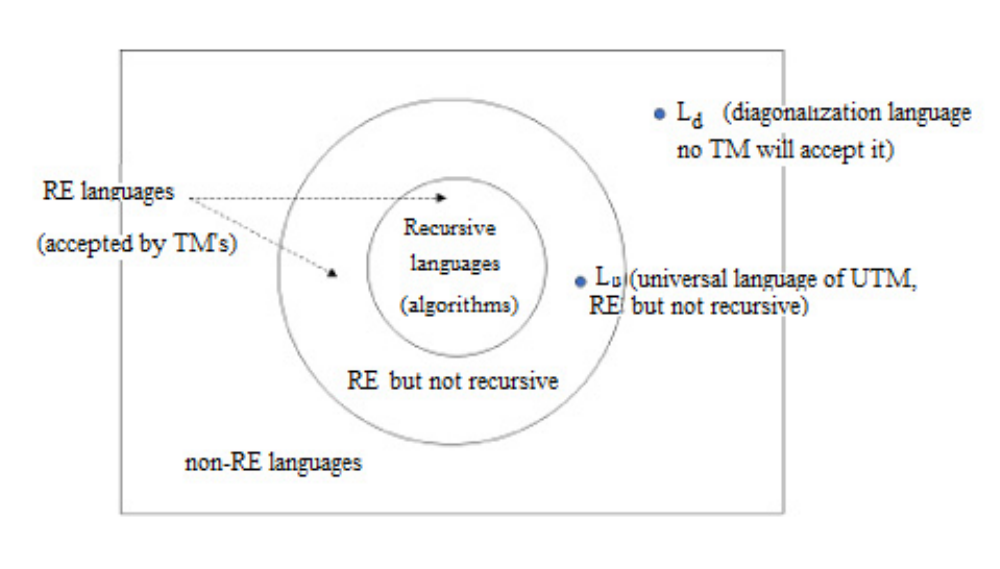}
\end{center}\vspace*{-11mm}
\caption{Relation between Recursive, RE and non-RE Languages}
\end{figure}

\bigskip
We use the simplicity of the TM model to prove formally that there are specific
problems (languages) that the TM cannot solve \cite{hopcroft01}.
Solving the problem is equivalent to decide whether a string belongs to the language.
A problem that cannot be solved by computer (Turing machine) is called
{\em undecidable} (TM-undecidable).
The class of languages accepted by Turing machines are called
{\em recursively enumerable (RE) languages}.
For RE-languages, TM can accept the strings in the language but cannot tell
for certain that a string is not in the language.

\bigskip
There are two classes of Turing machine unsolvable languages (problems):
\begin{description}
\item{\em recursively enumerable but not recursive (REnonREC)}  - TM  can accept the strings in the language
but cannot tell for certain that a string is not in the language (e.g., the language of the universal
Turing machine, or Post's Correspondence Problem languages).
A language is decidable but its complement is undecidable.
We can denote that class as $RE \cap nonREC$, but $REnonREC$ is shorter.
\item{\em non-recursively enumerable (nonRE)} - no TM can even recognize the members of the language
in the RE sense (e.g., the diagonalization language). Neither a language nor
its complement is decidable, or a language is undecidable
but its complement is decidable. 
\end{description}

Decidable problems have a (recursive) algorithm, i.e., TM halts whether or not it accepts its input.
Decidable problems are described by {\em recursive languages}.
Recursive algorithms as we know are associated with the class of recursive languages,
a subset of recursively enumerable languages for which we can construct its accepting TM.
For recursive languages, both a language and its complement are decidable.

Turing machines are used as a formal model of classical (recursive)  algorithms.
An algorithm should consist of a finite number of steps, each having well defined
and implementable meaning.
We are convinced that computer computations are not restricted to such restrictive
definition of algorithms only. If we allow for an infinite number of steps (e.g., reactive programs)
and/or not well defined/implementable meaning
of each step (e.g., an Oracle), we are in the class of super-recursive algorithms \cite{burgin05}.

Such new types of computation and computational models are called very often hypercomputation (or super-Turing computation) and hypercomputational (or super-Turing) models of computation.

\bigskip
\begin{Definition}(On super-Turing computation)
\newline
By {\em super-Turing computation} (also called {\em hypercomputation}) we mean any computation that cannot be carried out
by a Turing machine as well as any (algorithmic) computation carried out by a Turing
machine.
\end{Definition}

\bigskip
\begin{Definition}(On time complexity)
\newline
We can define four classes of problems/
languages for abstract automata/ machines over discrete finite alphabet $\Sigma$ of at least one symbol to decide strings in the language in the order of increasing hardness and complexity. We will define time complexity counting the number of steps, however an analogous definition for space complexity counting the number of memory cells can be defined:

\begin{enumerate}
{\em
\item
{\bf p-decidable problems:} The number of steps is polynomial in the problem size and problems will be called 
{\em polynomially decidable}\\
({\em p-decidable}).
\item
{\bf e-decidable problems:} The number of steps is at least exponential  in the problem size and problems will be called {\em exponentially decidable}
({\em e-decidable}).
\item
{\bf a-decidable problems:} The number of steps is infinite but computed in finite time, i.e., {\em asymptotically/limit
decidable} ({\em a-decidable})
(analogy: convergent infinite series,
mathematical induction, computing infinite sum in definite integral).
\item
{\bf i-decidable problems:} The number of steps is infinite and requires infinite time
to decide strings in the problem size, i.e., {\em infinitely decidable} ({\em i-decidable}) (undecidable in the finite sense).
}
\end{enumerate}
\end{Definition}

The classical complexity theory usually covers classes (1) as easy/tractable and class (2) as intractable problems.
In class (1) polynomials have usually small constant coefficients and exponents. In class (2) functions can be transcomputationally complex, e.g., $1.5^n$, $2^n$, $n!$, $2^{2^n}$, $2^{2^{2^n}}$. Separation of class (1) and (2) is not rigid. There can be complexities between polynomials and exponentials, e.g., $n^{log_2 n}$.

Class (3) is
an intermediate class because although technically it requires an infinite number of steps
(or memory cells for space complexity), we can find
a solution in the limit using finite resources. These can be for instance infinite geometric series where sum is convergent in infinity and computed by formulas in finite number of steps. The same is with infinite series used to compute numbers like $e$ or $\pi$. The mathematical induction proofs also have an infinite number of steps, however an induction step allows to fold them to finite computations. Class (3) is represented by convergent in infinity subset of
Inductive Turing Machines,
anytime algorithms,
evolutionary algorithms, or \$-calculus.
Class (4) requires infinite resources and is unsolvable by Turing Machine
(but solvable by hypercomputers using infinite resources). Classes (2) , (3) and (4) cover non-polynomial algorithms,
classes (3) and (4)
belong to super-recursive algorithms \cite{burgin05}. Class (1) and (2) cover recursive algorithms.

Obviously, undecidable problems are characterized by computations growing faster than exponentially (i.e., hyperexponentially) or they may have even an infinite computational complexity (e.g., an enumerable or real numbers infinity type).

Note that we have in reality an infinite hierarchy of infinities (cardinalities, ordinals) in mathematics/set theory \cite{kuratowski77}, whereas computer science considers only typically enumerable infinity (denoted by  ordinal numbers $\omega$, $\alpha$   or  cardinality $\aleph_0$) with some extension to true real numbers (denoted by ordinal number $c$ or cardinal number $\aleph_1$) represented by analog computers, neural networks operating on real numbers or evolution strategies operating on vectors of real numbers.

Traditional complexity theory deals with various type of decidable algorithms (see e.g., \cite{kleinberg06}), i.e., graph, greedy, divide and conquer, dynamic programming, network flow, NP-complete, PSpace, approximation, local search, and randomized algorithms. All of them have polynomial or exponential complexities. Algorithms than run forever \cite{kleinberg06} and super-recursive algorithms \cite{burgin05} require an extension of complexity theory to infinite cases (nobody did it so far for enumerable $\aleph_0$  or non-enumerable infinities $\aleph_1$, $\aleph_2$, $\aleph_3$,...). For example, local search, also known as hill climbing, typically allows to find local optima only. However, evolutionary algorithms if the search is complete and uses elitist selection allows in infinity to reach global optimum. Then, we do not solve a given problem in polynomial time nor in exponential time, but, perhaps, in infinite time (classes a-decidable and i-decidable of algorithms). But if infinite, what type of infinity we are talking about, $\aleph_0$ or $\aleph_1$?  Or something else?

It is necessary to make a distinction between recursively undecidable problems and super-recursively undecidable problems. At this moment, we can hypothesize that H-complete class covers super-recursively undecidable problems. Whether all of them? Probably not. This is an open research problem. Our belief is that assuming that Figure 2 covers all possible problems then Turing machine  recursively undecidable problems belong to two outer rings, and super-recursively undecidable problems belong only to the outer ring.

Note also that the granularity of outer ring in Figure 2 is not sufficient, because currently contains both D-complete, H-complete and complement of U-complete languages. In the future, the outer ring has to be partitioned further (probably, in the style of Cantor, leading, most likely, to an infinite hierarchy of undecidable problems).

\begin{Corollary}(On cardinality of unsolvable problems)\\
We have more unsolvable problems than solvable ones.

{\em Proof:} The justification is obvious. It is interesting that we have more undecidable/unsolvable problems (represented by the cardinality of real numbers - an uncountable infinity) than decidable ones (represented by the cardinality of natural numbers - an enumerable infinity). This is caused by the fact that decidable problems are modeled by Turing machines and we have only an infinte enumerable number of Turing machines possible (see, e.g., \cite{hopcroft01}). $\Box$
\end{Corollary}

The new complexity classes will be defined in the order of growing undecidability. Note that we concentrated on time computational complexities inspired by NP-complete problems class. An analogous classification can be provided based on memory computational complexities, e.g., inspired by PSPACE-complete problems class. 

\medskip
Before we will define 3 new complexity classes for Turing Machine undecidable problems, we will present a few examples of typical unsolvable problems.

The Universal TM simulates the work of arbitrary TM. Its language is RE but not recursive.
\bigskip
\begin{Definition}(On the Universal TM Language)
\newline
The {\em universal TM language} $L_u$ (of the Universal Turing Machine) UTM $U$ consists of the set
of pairs $(M,w)$, where $M$ is a binary encoding of TM and $w$ is its binary input. The UTM $U$ accepts $(M,w)$ iff
TM $M$ accepts $w$ {\em \cite{hopcroft01}}.
\end{Definition}

The diagonalization language $L_d$ is an example of the language that is believed
even more difficult in solvability than $L_u$, i.e., language of UTM accepting words $w$ for arbitrary
TM $M$. $L_d$ is non-RE, i.e., it does not exist any TM accepting it.

\bigskip
\begin{Definition}(On the Diagonalization Language)
\newline
The {\em diagonalization language} $L_d$  consists of all strings $w$ such that TM $M$ whose
code is $w$ does not accept when given $w$ as input {\em \cite{hopcroft01}}.
\end{Definition}

The existence of the diagonalization language that cannot be accepted by any TM
is proven by the diagonalization table with
``dummy'', i.e., not real/true values. Of course, there are many diagonalization
language encodings possible that depend how transitions of TMs are encoded.
This means that there are infinitely many different $L_d$ language instances (but
nobody wrote a specific example of $L_d$).
Solving the halting problem of UTM, can be used for a ``constructive''
proof of
the diagonalization language i-decidability demonstrating all strings belonging to the language.

\bigskip
\begin{Definition}(On Nonempty and Empty TM Languages)
\newline
The {\em Nonempty TM language} $L_{ne}$ consisting of all binary encoded TMs whose language is not empty, i.e., $L_{ne} = \{ M\; | \;L(M) \neq \emptyset \}$ is known to be recursively enumerable but not recursive, and its complement - the {\em Empty TM language} $L_e = \{ M\; | \;L(M) =  \emptyset \}$ consisting of all binary encoded TMs whose language is empty is known to be non-recursively enumerable {\em \cite{hopcroft01}}.
\end{Definition}

\bigskip
\begin{Definition}(The Post Correspondence Problem (PCP))
\newline
The TM undecidable {\em Post Correspondence Problem (PCP)} {\em \cite{post46}} asks, given two lists of the same number of strings over the same alphabet, whether we can pick a sequence of corresponding strings from the two lists and form the same string by concatenation.
\end{Definition}

\bigskip
\begin{Definition}(On Busy Beaver Problem (BBP))
\newline
The TM undecidable {\em Busy Beaver Problem (BBP)} {\em \cite{rado62}} considers a deterministic 1-tape
Turing machine with unary alphabet $\{1\}$ and tape alphabet $\{1,B\}$, where $B$
represents  the tape blank symbol. TM starts with an initial empty tape and accepts by
halting. For the arbitrary number of states $n=0,1,2,...$ TM tries to compute  two functions:
the maximum number of 1s written on tape before halting (known as the busy beaver
function $\Sigma(n))$, and the maximum number of steps before halting (known as the
maximum shift function $S(n))$.
\end{Definition}

In \cite{bringsjord12}, the author (inspired by the recent famous or infamous - pending on the point of view) the world economy crisis from 2008) introduced, partially for ``fun'', two other undecidable problems, related to BBP, namely the Economy Collapse Problem (ECP) and the Economy Immortality Problem (EIP).

\bigskip
\begin{Definition}(The Economy Collapse Problem (ECP))
\newline
Let $ECP(n)$ be the maximum amount of time for which any economy with $n$ states can function without collapsing {\em\cite{bringsjord12}}.  Here collapse can be equated with the corresponding Turing machine $M$
halting when started on a blank tape.  Compute $ECP(n)$ for
arbitrary values of $n$.
\end{Definition}

\bigskip
\begin{Definition}(The Economy Immortality Problem (EIP))
\newline
Let $EIP$ represent economies that never collapse, i.e., are
immortal {\em \cite{bringsjord12}}.  For arbitrary values of $n$ decide whether any economy with $n$ states is immortal.
\end{Definition}

\subsection{Undecidability complexity classes: U-complete, D-complete and H-complete problems}

Now we are ready to introduce 3 new classes of TM undecidable problems, inspired by the NP-complete class definition. Surprisingly, never such new complexity classes have been defined explicitly for undecidable problems. This would be equivalent that the NP-complete class for intractable problems was left undefined. It sounds strange indeed. We will try to correct that situation, because reduction techniques are commonly used for undecidable problems \cite{hopcroft01} without explicit definition of various undecidable classes of problems.

\bigskip
\begin{Definition}(On U-complete languages)
\newline
We say a language $L$ is {\em U-complete (Universal Turing Machine complete)} iff
\begin{enumerate}
\item
Any word $w$ can be decided in a finite number of steps if  $w \in L$, or it requires an infinite number of steps if  $w \notin  L$ (semi-decidability condition).
\item
For any language $L'$ satisfying (1) there is p-decidable, or e-decidable reduction of $L'$ to $L$ (completeness condition).
\end{enumerate}
\end{Definition}

\bigskip
\begin{Corollary}
U-complete languages belong to the {\em REnonREC} class.
$\Box$
\end{Corollary}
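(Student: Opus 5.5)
The claim splits into two inclusions, $L \in RE$ and $L \notin REC$, for an arbitrary U-complete language $L$. Throughout I read ``decided in a finite number of steps'' in condition (1) as acceptance by some Turing machine in the sense of $L(M)$ or $H(M)$ from Section 2.

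\emph{Step 1: $L \in RE$.} Condition (1) gives a machine that halts and accepts after finitely many steps exactly on the words $w \in L$, and runs forever on $w \notin L$. This is the definition of a recursively enumerable language, so $L = L(M)$ (equivalently $H(M)$) for some TM $M$.

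\emph{Step 2: $L \notin REC$.} I will use condition (2) together with the universal language $L_u$.
\begin{enumerate}
\item $L_u$ satisfies (1): the UTM $U$ simulates $M$ on $w$, halts and accepts when $M$ accepts, and otherwise runs forever. It is also standard that $L_u$ is not recursive \cite{hopcroft01}.
\item By the completeness condition (2), there is a p-decidable or e-decidable reduction $f$ with $x \in L_u \iff f(x) \in L$.
\item By the definition of time complexity, p-decidable and e-decidable procedures are recursive: they halt on every input after finitely many steps.
\item Suppose $L$ were recursive, decided by a machine $D$. Composing $f$ with $D$ gives a TM that halts on every input and decides $L_u$. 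This contradicts the non-recursiveness of $L_u$.
\end{enumerate}
Hence $L \notin REC$, and combining with Step 1, $L \in RE \cap nonREC = REnonREC$.

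The main obstacle is interpretive rather than technical. Read literally, condition (1) says nonmembers \emph{require} infinitely many steps. That alone already forbids every recursive decider, which would make Step 2 immediate but is a stronger assertion than ``$L$ is RE''. I would present the reduction argument as the main proof, since it needs only the weaker RE reading of (1), and remark that the literal reading gives non-recursiveness directly.

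A second point to check is the hidden assumption that the reduction $f$ is total and computable. Here e-decidable means finitely many, if exponentially many, steps, so $f$ qualifies. I would state this explicitly before using $f$ in the composition.
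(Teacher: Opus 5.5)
Your proof is correct and follows the paper's reasoning. The paper gives no separate proof and treats the corollary as immediate from condition (1), read as ``members are accepted in finitely many steps, nonmembers can never be decided''. Your reduction-from-$L_u$ argument for $L \notin REC$ is exactly the one the paper gives in its proof of $REC \neq RE$.

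You also check explicitly that $L_u$ satisfies (1) and that p- and e-decidable reductions are total computable. This makes rigorous what the paper leaves implicit, and your route does not rely on the per-word phrase ``requires an infinite number of steps''.
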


Examples of U-complete languages include $L_u$ (a basic representative to call the whole class), PCP, $L_{ne}$, BBP, ECP, ambiguity of context-free grammars problem, nontrivial properties of RE languages, global optimization problem.

To prove that any new problem belongs to U-complete class, we check firstly that it is semi-decidable, i.e., words from the language are accepted and words outside of the language  cannot be decided. The 2nd part of the proof is done by reduction techniques using any known existing member from U-complete class. For details you can see for instance \cite{hopcroft01}, thus we skipped appropriate proofs. Similarly is with new instances of D-complete and H-complete classes. Various reduction examples has been provided in the literature, thus we will not repeat that effort.

The {\em U-hard languages}, a superset of U-complete languages, satisfy only completeness condition from the above definition.

\begin{Corollary}(On analogy of U-complete semi-decidable class with NP-complete class) \\
For U-complete semi-decidable class if you can guess words in the language you can decide acceptability of the word in finite number of steps.

{\em Proof:} Note that U-complete semi-decidable languages resemble NP-complete class - if you know the solution (the string in the language) you can decide in a finite number of steps that the string is accepted, similar like in NP class you can decide in a polynomial time about string acceptability. For this reason it starts a hybrid  ``easiest class'' in the hierarchy of TM undecibale problems, followed by D-complete and H-complete languages.$\Box$
\end{Corollary}
However, the similarity with NP-complete class ends here, because U-complete languages are undecidable. For NP-complete class $P\: ?=\: NP$ remains one of the unsolved mysteries of computer science, whereas the similar answer for $REC\: ?=\: RE$ is straightforward.

\begin{Theorem}(On undecidability of U-complete class)\\
$REC \neq RE$.

{\em Proof:} is obvious. U-complete languages cover the ring $REnonREC$ outside of the $REC$ decidable class. Language of the Universal TM $L_u$ has been proven to be undecidable. All other U-complete languages can be obtained by reduction from $L_u$ by definition, thus they are undecidable too.  
$\Box$
\end{Theorem}

\bigskip
\begin{Definition}(On D-complete languages)
\newline
We say a language L is {\em D-complete (Diagonalization complete)} iff
\begin{enumerate}
\itemsep=0.95pt
\item
Any word $w$ from $L$ cannot be decided in a finite number of steps (undecidability condition).
\item
For any language $L'$ satisfying (1) there is p-decidable, or e-decidable reduction of $L'$ to $L$ (completeness condition).
\end{enumerate}
\end{Definition}

\bigskip
\begin{Corollary}
D-complete languages belong to the {\em non-RE} class.$\Box$
\end{Corollary}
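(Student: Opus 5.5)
The plan is to derive the Corollary from the two conditions of the definition of D-complete languages. The key input is the undecidability condition (1), set against the characterization of $RE$ given above: a language is RE iff some TM accepts every string in it, halting in a finite number of steps on each such string.

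The argument proceeds by contradiction. Suppose a D-complete language $L$ were in $RE$, so some TM $M$ has $L(M)=L$.

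Then for every $w\in L$ the machine $M$ reaches an accepting state after finitely many moves. So membership of $w$ is confirmed, that is, decided in the sense used in condition (1) of the U-complete definition, in a finite number of steps.

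This contradicts condition (1) of the D-complete definition, provided $L\neq\emptyset$. For that case it suffices to note that $L_d$ satisfies condition (1): $L_d$ is non-RE by the diagonalization argument recalled after its definition.

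The completeness condition (2) should settle nonemptiness. Since $L_d$ satisfies (1), it reduces to $L$ by a p-decidable or e-decidable reduction $f$. Such a reduction is a total computable map. Hence if $L$ were RE, then $L_d=f^{-1}(L)$ would be RE as well, because composing $f$ with the acceptor $M$ accepts exactly $L_d$. This contradicts $L_d\notin RE$. This second route also bypasses any worry about the empty language, since $L_d\neq\emptyset$ and $f$ maps its members into $L$. I would present this reduction argument as the main proof and the direct use of condition (1) as the intuition.

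The main obstacle is the informality of condition (1), which reads ``cannot be decided in a finite number of steps'' for each word. I would interpret it as ``no TM accepts $w$ within finitely many steps in a way uniform over $L$'', that is, as non-semi-decidability. Under this interpretation the inclusion in $non\mbox{-}RE$ is immediate. The reduction argument via (2) is offered precisely so the conclusion does not hinge on how (1) is read.
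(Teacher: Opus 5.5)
Your proposal is correct, but the route you designate as the main proof is not the paper's. The paper gives no argument: the corollary carries only a $\Box$, so it is treated as immediate from condition (1). That condition, set against the semi-decidability condition of the U-complete definition, is meant to say that no TM semi-decides $L$. This is exactly your ``direct'' route, and under that reading the corollary is just a restatement of the definition.

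Your main route instead goes through the completeness condition (2) with $L'=L_d$. A p- or e-decidable reduction is a total computable $f$, RE is closed under inverse images of such maps, and $L_d\notin RE$, so $L\notin RE$. This buys something real. It never needs to extract non-semi-decidability of $L$ from the wording of (1), and it disposes of the vacuous case $L=\emptyset$ you worried about. Since it uses only (2), it also proves the stronger statement that every D-hard language is non-RE.

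Your claim that this makes the conclusion independent of how (1) is read is overstated, though. Applying (2) requires $L_d$ itself to satisfy (1), and you justify that by ``$L_d$ is non-RE''. That is the same language-level reading, used in the converse direction. Under a literal per-word reading even $L_d$ fails (1): a word $w\in L_d$ whose machine $M_w$ halts and rejects $w$ is confirmed to lie in $L_d$ after finitely many simulation steps. It is cleaner to take ``$L_d$ satisfies (1)'' from the paper's designation of $L_d$ as the basic representative of the class.

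Incidentally, either route shows that the paper's listed example ``complement of $L_d$'', which is RE, cannot actually be D-complete.
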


Examples of D-complete languages include $L_d$ (a basic representative to call the whole class), $L_e$, EIP, complement of $L_d$, complement of $L_u$, complement of BBP, complement of ambiguity of CFG.

The {\em D-hard languages}, a superset of D-complete languages, satisfy only completeness condition from the above definition.

\bigskip
\begin{Definition}(The hyper-diagonalization language)
\newline
The {\em hyper-diagonalization language} $L_{hd}$ consists of all strings $w$ such that TM $M$ whose code is $w$ will not  accept even in an infinite number of steps when given $w$ as input.
\end{Definition}

\bigskip
\begin{Definition}(On H-complete languages)
\newline
We say a language $L$ is {\em H-complete (Hypercomputation complete)} iff
\begin{enumerate}
\itemsep=0.95pt
\item
Any word $w$ from or outside of $L$ cannot be decided in an infinite number of steps (hyper-undecidability condition).
\item
For any language $L'$ satisfying (1) there is an a-decidable or i-decidable reduction of $L'$ to $L$ (completeness condition).
\end{enumerate}
\end{Definition}

\bigskip
\begin{Corollary}
H-complete languages belong to the {\em non-RE} class.$\Box$
\end{Corollary}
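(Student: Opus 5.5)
The plan is to argue by contradiction from the hyper-undecidability condition (1) of the definition of H-complete languages, in the same way the preceding corollary for D-complete languages is justified. Let $L$ be H-complete and suppose, towards a contradiction, that $L$ is recursively enumerable. Then there is a Turing machine $M$ with $L(M)=L$ (acceptance by final state, as in Definition of the Turing machine, or equivalently by halting). For every $w \in L$, $M$ reaches an accepting state after some finite number $k_w$ of moves. Hence every word of $L$ can be decided positively in finitely many steps, and a fortiori within an infinite number of steps.

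Next I would check that $L \neq \emptyset$, so that such a word $w$ really exists and condition (1) is contradicted. If $L$ were empty, I would fall back on the words outside of $L$: for the empty language the trivial machine that halts immediately without accepting decides every $w \notin L$ in one step. So condition (1) fails for $L=\emptyset$ as well, since (1) quantifies over words both from and outside of $L$. In either case the existence of $M$ contradicts condition (1), so $L \notin RE$, i.e., $L$ belongs to the non-RE class.

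As a cross-check I would also derive the statement through the completeness condition (2), using the hyper-diagonalization language $L_{hd}$ as a representative. This requires first arguing that $L_{hd}$ itself satisfies (1), which is a diagonal argument in the style of Cantor/Turing applied to machines allowed infinitely many steps. Then the a-decidable or i-decidable reduction transfers this to $L$. Note, however, that such reductions are not Turing-computable, so the usual closure of RE under reductions cannot be invoked.

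The main obstacle is precisely this second route: H-completeness is stated with super-Turing reductions, so membership in non-RE cannot follow from reductions and must rest on condition (1) directly. For this reason I would make the first, direct argument the actual proof. I would add only a remark that condition (1) is strictly stronger than the undecidability condition of D-complete languages, which places H-complete languages in the outer ring of Figure 2.
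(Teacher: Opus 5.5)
Your proposal is correct and is the argument the paper intends: the paper states this corollary with no proof, as an immediate consequence of the hyper-undecidability condition (1), which is stronger than the undecidability condition for D-complete languages. Your version makes that explicit: in an RE language every member is accepted in finitely many steps, which (1) forbids, and your separate treatment of $L=\emptyset$ covers the one case with no such member. You are also right to set aside the reduction-based route, because RE is not closed under a-decidable or i-decidable reductions.
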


A canonical representative of this class is $L_{hd}$ .

The {\em H-hard languages}, a superset of H-complete languages, satisfy only completeness condition from the above definition.

We can conclude the following about undecidable problems classes.

\begin{Hypothesis}(On infinite hierarchy of undecidable complexity classes)\\
We have an infinite hierachy of undecidable classes.

{\em Proof Outline:} It is based on section 5 of this paper and Cantor's hierarchy of infinite  cardinalities \cite{cantor74,cantor91} and Turing's infinite hierarchy of more complete logics (see \cite{turing39} and subsection 4.3 from this paper). $\Box$
\end{Hypothesis}

\section{Conclusions and Final Comments}

This paper concentrates on canonical contributions of Alan Turing leading to foundations of computer science and programmable computers. This includes both Turing $a$-machines and superTuring $o$-machines. It was stressed that in $a$-machines as well as in $o$-machines Turing used Cantor's seminal results (e.g., the diagonalization, cardinalities of power sets, and the infinite sequence of ordinals).
The above means that Georg Cantor also indirectly influenced computer science and his contributions to computer science deserve to be remembered. We can say even more: without Cantor there would be no Turing's results. However, this does not diminish the historical achievements of Alan Turing. 

Probably, the most concise and convincing answer  to the paper's title  question who: Alan Turing or Georg Cantor contributed more to computer science would be the following. By Alan Turing problems (languages) are supersets of algorithms (Turing machines) that are encoded as strings of natural numbers. By Georg Cantor the all supersets of natural numbers form real numbers, thus we have more undecidable problems than decidable algorithms. In other words, we have more undecidable problems than decidable ones, and paradoxically, computer science spends most of its efforts on decidable problems, and from them more time and efforts on ``easy'' polynomial algorithms rather than on ``hard'' exponential ones.

In the paper it is proposed to update Turing historical results by adding U-complete, D-complete and H-complete complexity classes for undecidable problems (never done explicitly before). A shorter proof of Entscheidungsproblem \cite{hilbert28,turing36} has been presented using Cantor's ideas too. 
It is interesting that accidentally somehow similar approach as used by Turing \cite{turing39} (and based on Cantor's infinite cardinalities) for completeness of logics was used in \cite{eber23} to obtain completeness of cost metrics and meta-search algorithms in \$-calculus.   

Other contributions of Alan Turing to cryptology (Enigma, Colossus), u-machines (genetic algorithms, reinforcement learning, neural networks) \cite{turing48}, robotics, artificial intelligence, artificial life and computational biology \cite{eber04a} are outside of the scope of this paper. 

Currently, we are exhibiting the time of booming machine learning and artificial intelliegnce. Alan Turing, besides his Turing machines and algotirhms, was the precursor of artificial intelligence too. It looks that computers finally are able to pass the Turing test for intelligence \cite{turing48}. However, the progress of research on deep neural networks and deep language models did not allow to achieve general artificial intelligence so far \cite{eber26}.

Of course, there are much more interesting ideas than those covered in this paper. For sure, the results, e.g., of Alfred Tarski, Stan Ulam, Kurt G\"{o}del would require re-visitation, more elaborations, and, hopefully,  would be done in the future.

\section*{Acknowledgments}
This paper has been written in honor of late Professor Peter Wegner from Brown University and late Professor Mark Burgin from University of California Los Angeles - my co-authors of several research papers, with whom I had the honor and pleasure to cooperate for many years. The author appreciates useful comments from anonymous reviewers and editors.

\end{document}